\newtheorem{assumption}{Assumption}
\newtheorem{lemma}{Lemma}
\newtheorem{proposition}{Proposition}
\newtheorem{theorem}{Theorem}
\title{The Reinforce Policy Gradient Algorithm Revisited}
\author{Shalabh Bhatnagar\\
Department of Computer Science and Automation, \\Indian Institute of Science, \\Bengaluru 560012, India\\
        {\tt\small shalabh@iisc.ac.in}
\thanks{The author was supported by a J.~C.~Bose Fellowship, Project No.~DFTM/ 02/ 3125/M/04/AIR-04 from DRDO under DIA-RCOE, a project from DST-ICPS, and the RBCCPS, IISc.}}
\date{October 2023}
\begin{document}
\baselineskip=21pt
\maketitle

\begin{abstract}
We revisit the Reinforce policy gradient algorithm from the literature. Note that this algorithm typically works with cost returns obtained over random length episodes obtained from either termination upon reaching a goal state (as with episodic tasks) or from instants of visit to a prescribed recurrent state (in the case of continuing tasks). We propose a major enhancement to the basic algorithm. We estimate the policy gradient using a function measurement over a perturbed parameter by appealing to a class of random search approaches. This has advantages in the case of systems with infinite state and action spaces as it relax some of the regularity requirements that would otherwise be needed for proving convergence of the Reinforce algorithm. 
Nonetheless, we observe that even though we estimate the gradient of the performance objective using the performance objective itself (and not via the sample gradient), the algorithm converges to a neighborhood of a local minimum. 
We also provide a proof of convergence for this new algorithm.
\end{abstract}

\begin{keywords}
  Reinforce policy gradient algorithm, smoothed functional technique, stochastic gradient search, stochastic shortest path Markov decision processes.
\end{keywords}

\section{Introduction}
\label{introduction}

Policy gradient methods \cite{suttonbarto, sutton1999policy} are a popular class of approaches in reinforcement learning. The policy in these approaches is considered parameterized and one updates the policy parameter along a gradient search direction where the gradient is of a performance objective, normally the value function. The policy gradient theorem 
\cite{sutton1999policy, marbach2001simulation, cao} which is a fundamental result in these approaches relies on an interchange of the gradient and expectation operators and in such cases turns out to be the expectation of the gradient of noisy performance functions much like the previously studied perturbation analysis based sensitivity approaches for optimization via simulation \cite{hocao, choram1}. 

The Reinforce algorithm \cite{williams1992simple, suttonbarto} is a noisy gradient scheme for which the expectation of the gradient is the policy gradient, i.e., the gradient of the expected objective w.r.t.~the policy parameters. The updates of the policy parameter are however obtained once after the full return on an episode has been found. Actor-critic algorithms \cite{suttonbarto,konda1999actor,konda2003onactor, bhatnagar2009natural,bhatnagar2007incremental} have been presented in the literature as alternatives to the Reinforce algorithm as they perform incremental parameter updates at every instant but do so using two-timescale stochastic approximation algorithms. 


In this paper, we revisit the Reinforce algorithm and present a new algorithm for the case of episodic tasks or the stochastic shortest path setting. Our algorithm performs parameter updates upon termination of episodes, that is when goal or terminal states are reached. 
In this setting, as mentioned above, updates are performed only at instants of visit to a prescribed recurrent state \cite{cao, marbach2001simulation}. 
This algorithm is based on a single function measurement or simulation at a perturbed parameter value where the perturbations are obtained using independent Gaussian random variates. 

Gradient estimation in this algorithm is performed using the smoothed functional (SF) technique for gradient estimation \cite{rubinstein, bhatbor3, bhat2, bhatnagar-book}. The basic problem in this setting is the following: Given an objective function $J:\mathcal{R}^d\rightarrow\mathcal{R}$ such that $J(\theta) = E_\xi[h(\theta,\xi)]$, where $\theta\in \mathcal{R}^d$ is the parameter to be tuned and $\xi$ is the noise element, the goal is to find $\theta^*\in\mathcal{R}^d$ such that
\[
J(\theta^*) = \min_{\theta\in \mathcal{R}^d} J(\theta).
\]
Since the objective function $J(\cdot)$ can be highly nonlinear, one often settles for a lesser goal -- that of finding a local instead of a global minimum. 
In this setting, the Kiefer-Wolfowitz \cite{kw} finite difference estimates for the gradient of $J$ would correspond to the following: For $i=1,\ldots,d$,
\[
\nabla_i J(\theta) = \lim_{0<\delta\downarrow 0}
E_\xi\left[\frac{h(\theta+\delta e_i,\xi^+_i) - h(\theta-\delta e_i,\xi^-_i)}{2\delta}\right],
\]
where $e_i$ is the unit vector with 1 in the $i$th place and $0$ elsewhere. Further, $\xi^+_i$ and $\xi^-_i$, $i=1,\ldots,d$ are independent noise random variables having a common distribution. The expectation above is taken w.r.t this common distribution on the noise random variables. 
This approach does not perform well in practice when $d$ is large, since one needs $2d$ function measurements or simulations for a  parameter update. 

Random search methods such as simultaenous perturbation stochastic approximation (SPSA) \cite{spall1992multivariate, spall1997one, bhat1}, smoothed functional (SF) \cite{katkul, bhatbor3, bhat2} or random directions stochastic approximation (RDSA) \cite{kushcla, prashanth2017rdsa} typically require much less number of simulations.  
For instance, the gradient based algorithms in these approaches require only one or two simulations regardless of the parameter dimension $d$, while their Newton-based counterparts usually involve one to four system simulations for any parameter update (again regardless of $d$). A textbook treatment of random search approaches for stochastic optimization is available in \cite{bhatnagar-book}. 

We consider here a one-simulation SF algorithm where the gradient of $J(\theta)$ is estimated using a noisy function measurement, at the parameter $\theta+\delta\Delta$, where $\delta>0$ and $\Delta\stackrel{\triangle}{=}(\Delta_1,\ldots,\Delta_d)^T$ with each $\Delta_i \sim N(0,1)$ with $\Delta_i$ being independent of $\Delta_j$, $\forall j\not=i$. Further, $\Delta$ is independent of the measurement noise as well. The gradient estimate in this setting is the following: For $i=1,\ldots,d$,
\[
\nabla_i J(\theta) = \lim_{0<\delta\downarrow 0}
E_{\xi,\Delta}\left[\Delta_i \frac{h(\theta+\delta\Delta,\xi)}{\delta}\right].
\]
In the above, $\xi$ denotes the measurement noise random variable. 
The expectation above is with respect to the joint distribution of $\xi$ and $\Delta$. 
Before we proceed further, we present the basic Markov decision process (MDP)  setting as well as recall the Reinforce algorithm that we consider for the episodic setting. We remark here that there are not many analyses of Reinforce type algorithms in the literature in the episodic or stochastic shortest path setting. 


\section{The Basic MDP Setting}

By a Markov decision process, we mean a controlled stochastic process $\{X_n\}$ whose evolution is governed by an associated control-valued sequence $\{Z_n\}$. It is assumed that $X_n,n\geq 0$ takes values in a set $S$ called the state-space. Let $A(s)$ be the set of feasible actions in state $s\in S$ and ${\displaystyle A \stackrel{\triangle}{=} \cup_{s\in S} A(s)}$ denote the set of all actions. When the state is say $s$ and a feasible action $a$ is chosen, the next state seen is $s'$ with a probability $p(s'|s,a) \stackrel{\triangle}{=} P(X_{n+1}=s'\mid X_n=s, Z_n=a)$, $\forall n$. We assume these probabilities do not depend on $n$.
Such a process satisfies the controlled Markov property, i.e.,
\[ P(X_{n+1}=s' \mid X_n, Z_n, \ldots, X_0, Z_0) = p(s'\mid X_n, Z_n)
\mbox{ a.s.}
\]

By an admissible policy or simply a policy, we mean a sequence of functions
$\pi=\{\mu_0,\mu_1,\mu_2,\ldots\}$, with $\mu_i:S\rightarrow A$, $i\geq 0$, such that $\mu_i(s) \in A(s)$, $\forall s\in S$. The policy $\pi$ is a decision rule which specifies that if at instant $k$, the state is $i$, then the action chosen under $\pi$ by the decision maker would be $\mu_k(i)$.
A stationary policy $\pi$ is one for which $\mu_k = \mu_l \stackrel{\triangle}{=} \mu$, $\forall k,l=0,1,\ldots$. In other words, under a stationary policy, the function that decides the action-choice in a given state does not depend on the  instant $n$ at which the action is chosen. 

Associated with any transition to a state $s'$ from a state $s$ under action $a$, is a `single-stage' cost $g(s,a,s')$ where $g:S\times A\times S\rightarrow \mathcal{R}$ is called the cost function. The goal of the decision maker is to select actions $a_k, k\geq 0$ in response to the system states $s_k,k\geq 0$, observed one at a time, so as to minimize a long-term cost objective. We assume here that the number of states and actions is finite. 

\subsection{The Episodic or Stochastic Shortest Path Setting}

We consider here the episodic or the stochastic shortest path problem where decision making terminates once a goal or terminal state is reached. We let $1,\ldots,p$ denote the set of non-terminal or regular states and $t$ be the terminal state. Thus, $S=\{1,2,\ldots,p,t\}$ denotes the state space for this problem.

Our basic setting here is similar to Chapter 3 of  \cite{bertsekas2}, where it is assumed that under any policy there is a positive probability of hitting the goal state $t$ in at most $p$ steps starting from any initial (non-terminal) state, that would in turn signify that the problem would terminate in a finite though random amount of time.

Under a given policy $\pi$, define
\begin{equation}
    \label{vpi1}
V_\pi(s) = E_\pi\left[\sum_{k=0}^{T} g(X_k,\mu_k(X_k),X_{k+1})\mid X_0=s\right],
\end{equation}
where $T>0$ is a finite random time at which the process enters the terminal state $t$. Here $E_\pi[\cdot]$ indicates that all actions are chosen according to policy $\pi$ depending on the system state at any instant. We assume that there is no action that is feasible in the state $t$ and so the process terminates once it reaches $t$.

Let $\Pi$ denote the set of all admissible policies. The goal here is to find the optimal value function $V^*(i), i\in S$, where
\begin{equation}
    \label{vstar1}
V^*(i) = \min_{\pi\in \Pi} V_\pi(i) = V_{\pi^*}(i), \mbox{ } i\in S.
\end{equation}
Here $\pi^*$ denotes the optimal policy, i.e., the one that minimizes $V_\pi(i)$ over all policies $\pi$. A related goal then would be to find the optimal policy $\pi^*$. It turns out that in these problems, there exist stationary policies that are optimal, and so it is sufficient to search for an optimal policy within the class of stationary policies.

A stationary policy $\pi$ is called a proper policy (cf.~pp.174 of \cite{bertsekas2}) if
\[\hat{p}_\pi \stackrel{\triangle}{=} \max_{s=1,\ldots,p}P(X_p \not= t\mid X_0=s, \pi) <1.\]
In other words, regardless of the initial state $i$, there is a positive probability of termination after at most $p$ stages when using a proper policy $\pi$ and moreover $P(T<\infty) =1$ under such a policy.

An admissible policy (and so also a stationary policy) can be randomized as well. A randomized admissible policy or simply a randomized policy is a sequence of distributions $\psi =\{\phi_0,\phi_1,\ldots\}$ with each $\phi_i:S\rightarrow P(A)$. In other words, given a state $s$, a randomized
policy would provide a distribution $\phi_i(s) = (\phi_i(s,a), a\in A(s))$
for the action to be chosen in the $i$th stage. A stationary randomized policy is one for which $\phi_j=\phi_k\stackrel{\triangle}{=} \phi$, $\forall j,k=0,1,\ldots$. In this case, we simply call $\phi$ to be a stationary randomized policy. 
Here and in the rest of the paper, we shall assume that the policies are stationary randomized and are parameterized via a certain parameter $\theta \in C \subset \mathcal{R}^d$, a compact and convex set.

We make the following assumption:
\begin{assumption}
\label{properpolicy}
All stationary randomized policies $\phi_\theta$ parameterized by $\theta\in C$ are proper.
\end{assumption}
In practice, one might be able to relax this assumption (as with the model-based analysis of \cite{bertsekas2}) by (a) assuming that for policies that are not proper, $V_\pi(i) =\infty$ for at least one non-terminal state $i$ and (b) there exists a proper policy. 
The optimal value function satisfies in this case the following Bellman equation: For $s=1,\ldots,p$,
\begin{equation}
    \label{bellmaneq}
    V^*(s) = \min_{a\in A(s)} \left(\bar{g}(s,a) + \sum_{j=1}^{p}p(j\mid s,a) V^*(j)\right),
\end{equation}
where ${\displaystyle \bar{g}(s,a) = \sum_{j=1}^{p}p(j|s,a) g(s,a,j)+ p(t|s,a)g(s,a,t)}$ is the expected single-stage cost in a non-terminal state $s$ when a feasible action $a$ is chosen.
It can be shown, see \cite{bertsekas2}, that an optimal stationary proper policy exists.




\subsection{The Policy Gradient Theorem}

 Policy gradient methods perform a gradient search within the prescribed class of parameterized policies. 
 Let $\phi_\theta(s,a)$ denote the probability of selecting action $a\in A(s)$ when the state is $s\in S$ and the policy parameter is $\theta\in C$.
We assume that $\phi_\theta(s,a)$ is continuously differentiable in $\theta$. 
A common example here is of the parameterized Boltzmann or softmax policies. Let $\phi_\theta(s) \stackrel{\triangle}{=} (\phi_\theta(s,a),a\in A(s))$, $s\in S$ and $\phi_\theta \stackrel{\triangle}{=}(\phi_\theta(s), s\in S)$. 

We assume that trajectories of states and actions are available either as real data or from a simulation device. 
Let ${\displaystyle G_k = \sum_{j=k}^{T-1} g_j}$ denote the sum of costs until termination (likely when a goal state is reached) on a trajectory starting from instant $k$.
Note that if all actions are chosen according to a policy $\phi$, then the value and Q-value functions (under $\phi$) would be
    \(V_\phi(s) = E_\phi[G_k \mid X_k=s]\) and 
\(    Q_\phi(s,a) = E_\phi[G_k\mid X_k=s, Z_k=a],
\)
respectively. 
In what follows, for ease of notation, we let $V_\theta \equiv V_{\phi_\theta}$. 

The policy gradient theorem for episodic problems has the following form, cf.~\cite{sutton1999policy, suttonbarto}:
\begin{equation}
\label{pgt}
\nabla V_{\theta}(s_0) = \sum_{s\in S}\mu(s) 
\sum_{a\in A(s)} \nabla_\theta \pi(s,a) Q_\theta(s,a),
\end{equation}
where $\mu(s), s\in S$ is defined as ${\displaystyle \mu(s) = \frac{\eta(s)}{\sum_{s'\in S} \eta(s')}}$ where ${\displaystyle \eta(s) = \sum_{k=0}^{\infty} p^k(s|s_0,\pi_\theta)}$, $s\in S$, with $p^k(s|s_0,\pi_\theta)$ being the $k$-step transition probability of going to state $s$ from $s_0$ under the policy $\pi_\theta$.   A similar result holds for the long-run average cost setting with $\mu(s) = d^\pi(s)$ (the stationary distribution of $\{X_n\}$ under policy $\pi$), and $Q_\pi(s,a)$ is the state-action differential value function. Further, in the discounted cost setting too, a similar result holds but with $\eta(s)$ replaced by ${\displaystyle \eta(s) = \sum_{k=0}^{\infty} \gamma^k p^k(s|s_0,\pi_\theta)}$, where $0<\gamma<1$ is the discount factor. Proving the policy gradient theorem when the state-action spaces are finite is relatively straight forward \cite{sutton1999policy, suttonbarto}. However, one would require strong regularity assumptions on the system dynamics and performance function as with IPA or likelihood ratio approaches \cite{hocao} if the state-action spaces are non-finite i.e., countably infinite or continuously-valued sets.

The Reinforce algorithm \cite{suttonbarto, williams1992simple} makes use of the policy gradient theorem as the latter indicates that the gradient of the value function is the expectation of the gradient of a function of the noisy returns obtained from episodes. 
In what follows, we present an alternative algorithm based on Reinforce that incorporates a one-measurement SF gradient estimator. Our algorithm does not incorporate the policy gradient theorem and thus does not rely on an interchange between the gradient and expectation operators. Our algorithm incorporates a zeroth-order gradient approximation using the smoothed functional method and yet, like the Reinforce algorithm, requires only one sample trajectory. This is thus a clear advantage with our algorithm.
However, since our algorithm caters to episodic tasks, it performs updates at the instants of visit to a certain prescribed recurrent state as considered in \cite{cao, marbach2001simulation}. It is important to mention that such instants can be highly sparse in practice since in most practical systems, the number of state-action spaces can be very large in size.
We refer to our algorithm as the SF Reinforce algorithm. 


\section{The SF Reinforce Algorithm}

We consider here the case of episodic problems and the model-free setting whereby we do not assume any knowledge of the system model, i.e., the transition probabilities $p(s'\mid s,a)$, and in their place, we assume that we have access to data (either real or simulated). The data that is available is over trajectories of states, actions, single-stage costs and next states until termination.
We assume that multiple trajectories of data can be made available and the data on the $m$th  trajectory can be represented in the form of the tuples $(s^m_k,a^m_k,g^m_k,s^m_{k+1})$, $k=0,1,\ldots,T_m$ with $T_m$ being the termination instant on the $m$th trajectory, $m\geq 1$. Also, $s^m_j$ is the state at instant $j$, $j=k,k+1$ in the $m$th trajectory. Further, $a^m_k$ and $g^m_k$ are the action chosen and the cost incurred, respectively, at instant $k$ in the $m$th trajectory.
As mentioned before, we consider a class of stationary randomized policies that are parameterized by $\theta$ and satisfy Assumption~\ref{properpolicy}.


Let $\Gamma:{\cal R}^d\rightarrow C$
denote a projection operator that projects any $x=(x_1,\ldots,x_d)^T \in {\cal R}^d$ to its nearest point
in $C$. Thus, if $x\in C$, then $\Gamma(x)\in C$ as well.
For ease of exposition, we assume that $C$ is a
$d$-dimensional rectangle having the
form ${\displaystyle C = \prod_{i=1}^{d} [a_{i,\min}, a_{i,\max}]}$, where
$-\infty< a_{i,\min} < a_{i,\max} <\infty$, $\forall i=1,\ldots,d$. Then
$\Gamma(x)=(\Gamma_1(x_1),\ldots,\Gamma_d(x_d))^T$
with $\Gamma_i:{\cal R}\rightarrow [a_{i,\min},a_{i,\max}]$
such that
${\displaystyle \Gamma_i(x_i) = \min(a_{i,\max}, \max(a_{i,\min},x))}$, $i=1,\ldots,d$.
Also, let ${\cal C}(C)$ denote the space of all continuous functions from $C$ to ${\cal R}^d$.

In what follows, we present a procedure that incrementally updates the parameter $\theta$. Let $\theta(n)$ denote the parameter value obtained after the $n$th update of this procedure which depends on the $n$th episode and which is run using the policy parameter  $\Gamma(\theta(n)+\delta_n \Delta(n))$, for $n\geq 0$, where $\theta(n)=(\theta_1(n),\ldots,\theta_d(n))^T \in \mathcal{R}^d$, $\delta_n>0$ $\forall n$ with $\delta_n\rightarrow 0$ as $n\rightarrow\infty$ and $\Delta(n)=(\Delta_1(n),\ldots,\Delta_d(n))^T, n\geq 0$, where $\Delta_i(n),i=1,\ldots,d,n\geq 0$ are independent random variables distributed according to the $N(0,1)$ distribution. 

Algorithm (\ref{reinforce}) below is used to update the parameter $\theta \in C \subset \mathcal{R}^d$.
Let $\chi^n$ denote the $n$th state-action trajectory
$\chi^{n} = \{s^{n}_0,a^{n}_0,s^{n}_1,a^{n}_1,\ldots,s^{n}_{T-1},a^{n}_{T-1},s^{n}_T\}$, $n\geq 0$
where the actions $a^n_0,\ldots,a^n_{T-1}$ in $\chi^n$ are obtained using the policy parameter $\theta(n)+\delta_n\Delta(n)$. The instant $T$ denotes the termination instant in the trajectory $\chi^n$ and corresponds to the instant when the terminal or goal state $t$ is reached..
Note that the various actions in the trajectory $\chi^n$ are chosen according to the policy $\phi_{(\theta(n)+\delta_n\Delta(n))}$. The initial state is assumed to be sampled from a given initial distribution $\nu = (\nu(i),i\in S)$ over states.

Let ${\displaystyle G^n = \sum_{k=0}^{T-1} g^n_k}$ denote the sum of costs until termination on the trajectory $\chi^n$, with $g^n_k \equiv g(X^n_k,Z^n_k,X^n_{k+1})$ and where the superscript $n$ on the various quantities indicates that these correspond to the $n$th episode.
The update rule that we consider here is the following: For $n\geq 0, i=1,\ldots,d$,
\begin{equation}
    \label{reinforce}
    \theta_i(n+1) = \Gamma_i\left(\theta_i(n) -a(n) \left( \Delta_i(n) \frac{G^{n}}{\delta_n} \right)\right).
    \end{equation}
We assume that the step-sizes $a(n), n\geq 0$ satisfy the following requirement:
\begin{assumption}
\label{assum:ss}
The step-size sequence $\{a(n)\}$ satisfies $a(n)>0$, $\forall n$. Further,
\[\sum_n a(n)=\infty, \mbox{ } \sum_n \left(\frac{a(n)}{\delta_n}\right)^2 <\infty.
\]
\end{assumption}

Once the $n$th update of the parameter (i.e., $\theta(n)$) is obtained, the perturbed parameter $\theta(n)+\delta_n\Delta(n)$ is obtained after sampling $\Delta(n)$ from the multivariate Gaussian distribution as explained previously and thereafter a new trajectory governed by this perturbed parameter is generated with the initial state in each episode sampled according to a given distribution $\nu$. This is because each episode ends in the terminal state $t$ and any fresh episode starts from a non-terminal initial state sampled from the distribution $\nu$.



\section{Convergence Analysis}

We begin by rewriting the recursion (\ref{reinforce}) as follows:
\begin{equation}
    \label{reinforce2}
    \theta_i(n+1) = \Gamma_i\left(\theta_i(n) -a(n) E\left[\Delta_i(n)\frac{G^n}{\delta_n} | \mathcal{F}_n\right]
    + M^i_{n+1}\right),
    \end{equation}
    where
    \(M^i_{n+1} = \Delta_i(n) \frac{G^n}{\delta_n}
    - E\left[\Delta_i(n)\frac{G^n}{\delta_n} | \mathcal{F}_n\right], n\geq 0.
    \)
    Here, we let
    $\mathcal{F}_n \stackrel{\triangle}{=} \sigma(\theta(m), m\leq n, \Delta(m), \chi^m, m<n), n\geq 1$ as a sequence of increasing sigma fields with $\mathcal{F}_0 = \sigma(\theta(0))$. Let
    $M_n\stackrel{\triangle}{=} (M^1_n,\ldots,M^d_n)^T$, $n\geq 0$. 

\begin{lemma}
\label{firstlem}
$(M_n,\mathcal{F}_n),n\geq 0$ is a martingale difference sequence.
\end{lemma}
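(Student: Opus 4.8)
The plan is to verify the two defining properties of a martingale difference sequence: \emph{adaptedness} (each $M_{n+1}$ is integrable and $\mathcal{F}_{n+1}$-measurable) and the vanishing of the one-step conditional expectation, $E[M_{n+1}\mid\mathcal{F}_n]=0$ for all $n\ge 0$. The second property is essentially built into the definition of $M_{n+1}$ and will drop out of the tower property; the only genuinely non-routine point is integrability, and within it a uniform bound on the expected episode length.

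\emph{Measurability.} Since $\mathcal{F}_{n+1}=\sigma(\theta(m),\,m\le n+1,\ \Delta(m),\chi^m,\,m\le n)$, both $\Delta(n)$ and the trajectory $\chi^n$ are $\mathcal{F}_{n+1}$-measurable, and hence so is $G^n=\sum_{k=0}^{T-1}g^n_k$, being a measurable functional of $\chi^n$; consequently $\Delta_i(n)G^n/\delta_n$ is $\mathcal{F}_{n+1}$-measurable. The term $E[\Delta_i(n)G^n/\delta_n\mid\mathcal{F}_n]$ is $\mathcal{F}_n$-measurable by construction and $\mathcal{F}_n\subset\mathcal{F}_{n+1}$, so $M^i_{n+1}$ is $\mathcal{F}_{n+1}$-measurable for each $i$, and therefore $M_{n+1}$ is $\mathcal{F}_{n+1}$-measurable.

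\emph{Integrability.} As the state and action spaces are finite, $\|g\|_\infty \stackrel{\triangle}{=} \max_{s,a,s'}|g(s,a,s')|<\infty$ and $|G^n|\le \|g\|_\infty\,T$, where $T$ is the termination instant of episode $n$. Episode $n$ is generated under a policy whose parameter, $\Gamma(\theta(n)+\delta_n\Delta(n))$, lies in the compact set $C$ and is therefore proper by Assumption~\ref{properpolicy}. Since $\phi_\theta(s,a)$ is continuous in $\theta$ and $C$ is compact, the hitting probabilities $\hat{p}_{\phi_\theta}$ are bounded away from $1$ uniformly over $C$, which yields a uniform geometric tail bound on $T$ and hence $E[T\mid\mathcal{F}_n,\Delta(n)]\le K$ a.s.\ for some deterministic $K<\infty$. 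Conditioning on $(\mathcal{F}_n,\Delta(n))$ and using $E|\Delta_i(n)|=\sqrt{2/\pi}$ for the $N(0,1)$ variate $\Delta_i(n)$, we get $E\big[|\Delta_i(n)\,G^n|\big]\le \|g\|_\infty K\sqrt{2/\pi}<\infty$; as $\delta_n>0$, this makes $\Delta_i(n)G^n/\delta_n$ integrable, and hence so is its conditional expectation, giving $E|M^i_{n+1}|<\infty$ for every $i$ and $n$.

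\emph{Conditional expectation.} Since $E[\Delta_i(n)G^n/\delta_n\mid\mathcal{F}_n]$ is $\mathcal{F}_n$-measurable, the tower property gives
\[
E\left[M^i_{n+1} | \mathcal{F}_n\right] = E\left[\Delta_i(n)\frac{G^n}{\delta_n} | \mathcal{F}_n\right] - E\left[\Delta_i(n)\frac{G^n}{\delta_n} | \mathcal{F}_n\right] = 0
\]
for all $i=1,\dots,d$, whence $E[M_{n+1}\mid\mathcal{F}_n]=0$. Together with adaptedness and integrability this shows $(M_n,\mathcal{F}_n)$ is a martingale difference sequence. The step I expect to be the main obstacle is the uniform-in-$\theta$ control of the episode length $T$ (equivalently, that $\sup_{\theta\in C}E_{\phi_\theta}[T]<\infty$), for which one leans on Assumption~\ref{properpolicy}, the finiteness of the state--action spaces, continuity of $\phi_\theta$ in $\theta$, and compactness of $C$; everything else is bookkeeping plus the tower property.
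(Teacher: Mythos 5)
Your proof is correct and follows essentially the same route as the paper's: verify $\mathcal{F}_{n+1}$-measurability, integrability via Assumption~\ref{properpolicy} together with bounded single-stage costs, and the vanishing conditional expectation via the tower property. The only difference is that you spell out the integrability step (uniform geometric tail on $T$ from properness, compactness of $C$, and continuity of $\phi_\theta$ in $\theta$) which the paper states tersely as a direct consequence of Assumption~\ref{properpolicy}.
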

\begin{proof}
Notice that
\[M^i_{n} = \Delta_i(n-1) \frac{G^{n-1}}{\delta_{n-1}}
    - E\left[\Delta_i(n-1)\frac{G^{n-1}}{\delta_{n-1}} \mid \mathcal{F}_{n-1}\right].
    \]
    
    The first term on the RHS above is clearly measurable $\mathcal{F}_n$ while the second term is measurable $\mathcal{F}_{n-1}$ and hence measurable $\mathcal{F}_n$ as well.
Further, from Assumption~\ref{properpolicy}, each $M_n$
is integrable. Finally, it is easy to verify that
\[E[M^i_{n+1}\mid \mathcal{F}_n] =0, \mbox{ } \forall i.
\]
The claim follows.
\end{proof}


\begin{proposition}
\label{propest}
We have
\[E\left[\Delta_i(n) \frac{G^n}{\delta_n} \mid \mathcal{F}_n\right]
= \sum_{s\in S} \nu(s)\nabla_i V_{\theta(n)}(s) + o(\delta_n) \mbox{ a.s.}\]
\end{proposition}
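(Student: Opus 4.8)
\emph{The plan} is to peel off the randomness one layer at a time and then invoke the standard Gaussian smoothing (SF) identity. First, note that $\mathcal{F}_n$ contains $\theta(n)$ but neither $\Delta(n)$ nor the $n$th trajectory $\chi^n$, and that $\Delta(n)\sim N(0,I_d)$ is independent of $\mathcal{F}_n$. Conditioned on $\mathcal{F}_n$ and $\Delta(n)$, the episode $\chi^n$ starts from a state drawn from $\nu$ and is run under the policy $\phi_{\theta(n)+\delta_n\Delta(n)}$, which is proper by Assumption~\ref{properpolicy}; hence $G^n$ is integrable and $E[G^n\mid\mathcal{F}_n,\Delta(n)] = \sum_{s\in S}\nu(s)V_{\theta(n)+\delta_n\Delta(n)}(s) =: \bar V(\theta(n)+\delta_n\Delta(n))$, where $\bar V(\theta):=\sum_{s\in S}\nu(s)V_\theta(s)$. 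Applying the tower property with respect to $\sigma(\mathcal{F}_n,\Delta(n))\supseteq\mathcal{F}_n$, pulling out the $\mathcal{F}_n\vee\sigma(\Delta(n))$-measurable factor $\Delta_i(n)$, and then using that $\Delta(n)$ is independent of $\mathcal{F}_n$ while $\theta(n)$ is $\mathcal{F}_n$-measurable, I obtain
\[
E\!\left[\Delta_i(n)\frac{G^n}{\delta_n}\,\Big|\,\mathcal{F}_n\right] \;=\; \frac{1}{\delta_n}\,E_{\Delta}\!\left[\Delta_i\,\bar V(\theta(n)+\delta_n\Delta)\right],
\]
the right-hand expectation being over a single $N(0,I_d)$ vector $\Delta$ with $\theta(n)$ held fixed.

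Second, I would expand $\bar V(\theta(n)+\delta_n\Delta)$ by Taylor's theorem about $\theta(n)$ to second order and integrate term by term against $\Delta_i$ times the standard Gaussian density. The zeroth-order term drops out since $E[\Delta_i]=0$; the first-order term yields $\delta_n\,\nabla_i\bar V(\theta(n))$ because $E[\Delta_i\Delta_j]=\mathbf{1}\{i=j\}$; the second-order (Hessian) term vanishes because all third mixed moments $E[\Delta_i\Delta_j\Delta_k]$ of a standard Gaussian are zero; and the Taylor remainder, divided by $\delta_n$, is $O(\delta_n^2)=o(\delta_n)$, the relevant polynomial moments of $\Delta$ being finite. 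Dividing by $\delta_n$ and using $\nabla_i\bar V(\theta(n))=\sum_{s\in S}\nu(s)\nabla_i V_{\theta(n)}(s)$ gives exactly the asserted identity. (Alternatively, the identity $\Delta_i\varphi(\Delta)=-\partial_{\Delta_i}\varphi(\Delta)$ for the standard Gaussian density $\varphi$ permits an integration by parts that rewrites the integral as $E_\Delta[\nabla_i\bar V(\theta(n)+\delta_n\Delta)]$, after which a first-order Taylor expansion of $\nabla_i\bar V$ gives the same $o(\delta_n)$ conclusion.)

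\emph{The main obstacle} will be supplying the regularity that legitimizes the Taylor step: one needs each $V_\theta(s)$, hence $\bar V$, to be (at least) twice continuously differentiable in $\theta$ with derivatives bounded uniformly on a neighbourhood of $C$, so that the remainder is genuinely $o(\delta_n)$ and the $o(\cdot)$ can be pulled outside the Gaussian expectation by dominated convergence (the finite moments of $\Delta$ furnishing the dominating function). For proper policies this smoothness follows from the representation $V_\theta=(I-Q_\theta)^{-1}\bar g_\theta$, with $Q_\theta$ the substochastic matrix of transition probabilities among the non-terminal states under $\phi_\theta$, together with the continuous differentiability of $\theta\mapsto\phi_\theta(s,a)$ and the smoothness of matrix inversion on the set where $I-Q_\theta$ is invertible, which Assumption~\ref{properpolicy} guarantees. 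A secondary technicality is that $\theta(n)+\delta_n\Delta(n)$ can fall outside $C$; this is handled either by extending the parameterization and Assumption~\ref{properpolicy} to a neighbourhood of $C$ in $\mathcal{R}^d$, or by exploiting that $\Delta(n)$ has light tails and $\delta_n\downarrow 0$, so that the event $\{\theta(n)+\delta_n\Delta(n)\notin C\}$ contributes nothing at the order considered.
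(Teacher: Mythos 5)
Your proposal is correct and follows essentially the same route as the paper's own proof: condition on the enlarged sigma-field containing $\Delta(n)$ (the paper's $\mathcal{G}_n$), reduce $E[G^n\mid\cdot]$ to $\sum_s \nu(s)V_{\theta(n)+\delta_n\Delta(n)}(s)$, Taylor-expand about $\theta(n)$, and eliminate the zeroth- and second-order terms via the standard $N(0,1)$ moment identities. The additional care you take with the smoothness of $V_\theta$ (which the paper defers to Lemma~\ref{lipschitzv} and \cite{FurmstonLeverBarber}) and with the perturbed parameter possibly leaving $C$ strengthens, but does not alter, the argument.
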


\begin{proof}
Note that
\[ E\left[\Delta_i(n) \frac{G^n}{\delta_n} \mid \mathcal{F}_n\right]
= E\left[E\left[ \Delta_i(n) \frac{G^n}{\delta_n} \mid \mathcal{G}_n\right]\mid \mathcal{F}_n \right],
\]
where $\mathcal{G}_n \stackrel{\triangle}{=} \sigma(\theta(m), \Delta(m), m\leq n, \chi^m, m<n), n\geq 1$ is a sequence of increasing sigma fields with $\mathcal{G}_0 = \sigma(\theta(0),\Delta(0))$. It is clear that
$\mathcal{F}_n \subset \mathcal{G}_n, \forall n\geq 0$.
Now,
\[E\left[\Delta_i(n) \frac{G^n}{\delta_n} \mid \mathcal{G}_n\right]
= \frac{\Delta_i(n)}{\delta_n}
E[G^n\mid \mathcal{G}_n].
\]
Let $s^n_0=s$ denote the initial state in the trajectory $\chi^n$. Recall that the initial state $s$ is chosen randomly from the distribution $\nu$. Thus,
\[E[G^n\mid\mathcal{G}_n] = \sum_s \nu(s) E[G^n\mid s^n_0=s, \phi_{\theta(n)+\delta_n\Delta(n)}]\]
\[
= \sum_s \nu(s) V_{\theta(n)+\delta_n\Delta(n)}(s). \]
Thus, with probability one,
\[
E\left[\Delta_i(n) \frac{G^n}{\delta_n} \mid \mathcal{G}_n\right]
= \sum_{s} \nu(s)\left(\Delta_i(n) \frac{V_{\theta(n)+\delta_n\Delta(n)}(s)}{\delta_n}\right).
\]
Hence, it follows almost surely that
\[ E\left[\Delta_i(n) \frac{G^n}{\delta_n} \mid \mathcal{F}_n\right]
= \sum_s \nu(s) E\left[\Delta_i(n) \frac{V_{\theta(n)+\delta_n\Delta(n)}(s)}{\delta_n} \mid \mathcal{F}_n\right].
\]

Using a Taylor's expansion of $V_{\theta(n)+\delta_n\Delta(n)}(s)$ around $\theta(n)$ gives us
\[V_{\theta(n)+\delta_n\Delta(n)}(s_n) = V_{\theta(n)}(s_n) + \delta_n \Delta(n)^T \nabla V_{\theta(n)}(s_n)\]
\[+ \frac{\delta_n^2}{2} \Delta(n)^T \nabla^2 V_{\theta(n)}(s_n) \Delta(n) + o(\delta_n^2).
\]
Now recall that $\Delta(n)=(\Delta_i(n),i=1,\ldots,d)^T$. Thus,
\[
\Delta(n)\frac{V_{\theta(n)+\delta_n\Delta(n)}(s_n)}{\delta_n} = \frac{1}{\delta_n}\Delta(n)V_{\theta(n)}(s_n)\]\[
+\Delta(n)\Delta(n)^T \nabla V_{\theta(n)}(s_n)\]
\[
+ \frac{\delta_n}{2} \Delta(n) \Delta(n)^T \nabla^2 V_{\theta(n)}(s_n) \Delta(n) + o(\delta_n).
\]
%
Now observe from the properties of $\Delta_i(n), \forall i,n,$ that \\
(i) $E[\Delta(n)] =0$ (the zero-vector), $\forall n$, since $\Delta_i(n)\sim N(0,1)$, $\forall i,n$.\\
(ii) $E[\Delta(n)\Delta(n)^T] = I$ (the identity matrix), $\forall n$.\\
(iii) ${\displaystyle E\left[\sum_{i,j,k=1}^{d} \Delta_i(n)\Delta_j(n)\Delta_k(n)\right] =0}$. \\
Property (iii) follows from the facts that (a) $E[\Delta_i(n)\Delta_j(n)\Delta_k(n)]=0$, $\forall i\not=j\not=k$, (b) $E[\Delta_i(n)\Delta_j^2(n)] =0$, $\forall i\not=j$ (this pertains to the case where $i\not=j$ but $j=k$ above) and (c) $E[\Delta_i^3(n)] =0$ (for the case when $i=j=k$ above). These properties follow from the independence of the random variables $\Delta_i(n)$, $i=1,\ldots,d$ and $n\geq 0$, as well as the fact that they are all distributed $N(0,1)$. 
%
The claim now follows from (i)-(iii) above.
\end{proof}

In the light of Proposition~\ref{propest}, we can rewrite (\ref{reinforce}) as follows:
\[
    \theta(n+1) = \Gamma(\theta(n) - a(n)(\sum_{s}\nu(s)\nabla V_{\theta(n)}(s) + M_{n+1}\]
    \begin{equation}
        \label{equivform}
    +\beta(n))),
\end{equation}
where 
${\displaystyle \beta_i(n) = E\left[\Delta_i(n)\frac{G_n}{\delta} \mid \mathcal{F}_n \right]}$ $-\sum_{s} \nu(s) \nabla_i V_{\theta(n)}(s)$ and $\beta(n) = (\beta_1(n),\ldots,\beta_d(n))^T$. 
From Proposition~\ref{propest}, it follows that $\beta(n) = o(\delta_n)$. 

\begin{lemma}
\label{lipschitzv}
The function $\nabla V_\theta(s)$ is Lipschitz continuous in $\theta$. Further, $\exists$ a constant $K_1>0$ such that $\parallel \nabla V_\theta(s)\parallel \leq K_1(1+\parallel \theta\parallel)$.
\end{lemma}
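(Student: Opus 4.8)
The plan is to use the closed-form representation of the value function available in the stochastic shortest path setting. For $\theta$ in a neighborhood of $C$ let $P_\theta$ be the $p\times p$ substochastic matrix with entries $P_\theta(s,s')=\sum_{a\in A(s)}\phi_\theta(s,a)\,p(s'\mid s,a)$, $s,s'\in\{1,\ldots,p\}$, and let $\bar g_\theta\in\mathcal{R}^p$ have components $\bar g_\theta(s)=\sum_{a\in A(s)}\phi_\theta(s,a)\,\bar g(s,a)$. Writing $V_\theta=(V_\theta(1),\ldots,V_\theta(p))^{T}$, the definition (\ref{vpi1}) yields the linear system $(I-P_\theta)V_\theta=\bar g_\theta$; under Assumption~\ref{properpolicy} the policy $\phi_\theta$ is proper, so all row sums of $P_\theta^{\,p}$ are bounded by $\hat p_{\phi_\theta}<1$, which makes $I-P_\theta$ invertible with $(I-P_\theta)^{-1}=\sum_{k\ge 0}P_\theta^{\,k}$, and hence $V_\theta=(I-P_\theta)^{-1}\bar g_\theta$. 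First I would record this identity and note that $\|(I-P_\theta)^{-1}\|$ is finite for each $\theta\in C$.

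Next I would establish smoothness and differentiate. Since $S$ and $A$ are finite and $\theta\mapsto\phi_\theta(s,a)$ is continuously differentiable — indeed twice continuously differentiable for the standard softmax parameterization, a level of smoothness comparable to what is implicitly invoked in the Taylor expansion in the proof of Proposition~\ref{propest} — every entry of $P_\theta$ and every component of $\bar g_\theta$ is a finite sum of products of such functions and is therefore (twice) continuously differentiable in $\theta$. Invertibility of $I-P_\theta$ is an open condition and $\theta\mapsto P_\theta$ is continuous, so $I-P_\theta$ remains invertible on some open set $U\supseteq C$; on $U$ the map $\theta\mapsto(I-P_\theta)^{-1}$ is continuously differentiable as the composition of matrix inversion with a $C^1$ map, and consequently $\theta\mapsto V_\theta(s)$ is (twice) continuously differentiable on $U$. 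Differentiating the identity $(I-P_\theta)V_\theta=\bar g_\theta$ gives the explicit expression $\nabla_i V_\theta=(I-P_\theta)^{-1}\bigl(\nabla_i\bar g_\theta+(\nabla_i P_\theta)V_\theta\bigr)$ for $i=1,\ldots,d$, where $\nabla_i P_\theta$ and $\nabla_i\bar g_\theta$ are entrywise partial derivatives.

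The conclusion is then routine. Choose a compact neighborhood $C'$ of $C$ with $C'\subseteq U$. Each of $\|(I-P_\theta)^{-1}\|$, $\|V_\theta\|$, $\|\nabla_i P_\theta\|$ and $\|\nabla_i\bar g_\theta\|$ is continuous on $C'$, hence bounded there; substituting these bounds into the displayed formula shows $\sup_{\theta\in C'}\|\nabla V_\theta(s)\|\le K_1$ for some $K_1>0$, which trivially gives the growth bound $\|\nabla V_\theta(s)\|\le K_1\le K_1(1+\|\theta\|)$ on $C'$, and in particular on $C$. Lipschitz continuity of $\theta\mapsto\nabla V_\theta(s)$ on $C$ then follows from the corresponding uniform bound on the Hessian $\nabla^2 V_\theta(s)$ over the convex set $C$ together with the mean value inequality (equivalently, directly from the explicit formula, all of whose factors are Lipschitz on $C'$). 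The one point requiring care — and the main obstacle — is that the algorithm evaluates $V_\theta$ at the perturbed points $\theta(n)+\delta_n\Delta(n)$, which may leave $C$; one must therefore either verify that properness, equivalently invertibility of $I-P_\theta$, persists on a neighborhood of $C$, or strengthen Assumption~\ref{properpolicy} to hold on such a neighborhood. For softmax policies this is automatic, since then $\phi_\theta(s,a)>0$ for every $\theta$ and the reachability condition assumed on the model forces every such policy to be proper; securing a \emph{uniform} bound on $\|(I-P_\theta)^{-1}\|$ over the relevant region is exactly what keeps $K_1$ and the Lipschitz constant finite.
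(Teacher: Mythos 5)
Your proof is correct, but it proceeds quite differently from the paper's. The paper's proof is essentially a citation argument: it notes from the policy gradient theorem (\ref{pgt}) that $V_\theta(s)$ is continuously differentiable, invokes Theorem 3 of \cite{FurmstonLeverBarber} for the existence and continuity of $\nabla^2 V_\theta(s)$, concludes Lipschitz continuity of $\nabla V_\theta(s)$ from boundedness of the Hessian on the compact set $C$, and then derives the linear-growth bound from the Lipschitz property (whereas you get it trivially from uniform boundedness on a compact neighborhood --- both are fine, and on a compact set the two bounds are interchangeable). You instead work from first principles in the stochastic shortest path setting: the resolvent representation $V_\theta=(I-P_\theta)^{-1}\bar g_\theta$, invertibility of $I-P_\theta$ from properness, and explicit differentiation of the linear system to obtain $\nabla_i V_\theta=(I-P_\theta)^{-1}\bigl(\nabla_i\bar g_\theta+(\nabla_i P_\theta)V_\theta\bigr)$, then bound and Lipschitz-estimate each factor on a compact neighborhood of $C$. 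What your route buys is self-containedness (no appeal to an external second-order policy-gradient result) plus an explicit treatment of two points the paper glosses over: that the perturbed parameters $\theta(n)+\delta_n\Delta(n)$ can leave $C$ (Gaussian perturbations are unbounded), so smoothness/properness is really needed on a neighborhood of $C$, and that Lipschitz continuity of the gradient requires second-order smoothness of $\theta\mapsto\phi_\theta(s,a)$ beyond the stated $C^1$ assumption --- a requirement the paper implicitly imports through the cited theorem and you make explicit via the parameterization (e.g., softmax). What the paper's route buys is brevity and applicability without writing down the model-dependent matrices, at the cost of leaning on \cite{FurmstonLeverBarber} and on the unstated extra regularity. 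Your final caveat about securing a uniform bound on $\|(I-P_\theta)^{-1}\|$ over the relevant region is exactly the right thing to watch; on the compact neighborhood $C'$ it follows from continuity, as you say.
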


\begin{proof}
It can be seen from (\ref{pgt}) that $V_\theta(s)$ is continuously differentiable in $\theta$. It can also be shown as in Theorem 3 of \cite{FurmstonLeverBarber} that $\nabla^2 V_\theta(s)$ exists and is continuous. Since $\theta$ takes values in $C$, a compact set, it follows that $\nabla^2 V_\theta(s)$ is bounded and thus $\nabla V_\theta(s)$ is Lipschitz continuous.

Finally, let $L_1^s>0$ denote the Lipschitz constant for the function $\nabla V_\theta(s)$. Then, for a given $\theta_0\in C$,
\[ \parallel \nabla V_\theta(s)\parallel - \parallel \nabla V_{\theta_0}(s)\parallel
\leq \parallel \nabla V_\theta(s)-\nabla V_{\theta_0}(s) \parallel
\]
\[ \leq L_1^s \parallel \theta-\theta_0\parallel
\]
\[ \leq L_1^s \parallel \theta \parallel + L_1^s\parallel \theta_0\parallel.
\]
Thus,
\( \parallel \nabla V_\theta(s)\parallel \leq \parallel \nabla V_{\theta_0}(s)\parallel + L^s_1 \parallel \theta_0\parallel + L^s_1\parallel \theta\parallel.
\)
Let $K_s \stackrel{\triangle}{=} \| \nabla V_{\theta_0}(s)\| + L^s_1\| \theta_0\|$ and $K_1 \stackrel{\triangle}{=} \max(K_s, L^s_1, s\in S)$.
Thus, $\parallel \nabla V_\theta(s)\parallel \leq K_1(1+\parallel \theta\parallel)$. Note here that since $|S|<\infty$, $K_1<\infty$ as well. The claim follows. 
\end{proof}

\begin{lemma}
\label{sqintegmg}
The sequence $(M_n,\mathcal{F}_n)$, $n\geq 0$ satisfies
\(\displaystyle{E[\| M_{n+1}\|^2\mid \mathcal{F}_n] \leq
\frac{\hat{L}}{\delta_n^2}},
\)
for some constant $\hat{L}>0$.
\end{lemma}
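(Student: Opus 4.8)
The plan is to bound $E[\|M_{n+1}\|^2\mid\mathcal{F}_n]$ by first reducing it to a second-moment bound on the perturbed estimator $\Delta(n)G^n/\delta_n$, and then to a uniform second-moment bound on the episodic return $G^n$. Since $M_{n+1} = \frac{1}{\delta_n}\bigl(\Delta(n)G^n - E[\Delta(n)G^n\mid\mathcal{F}_n]\bigr)$, the componentwise conditional-variance identity $E[\|Y-E[Y\mid\mathcal{F}_n]\|^2\mid\mathcal{F}_n] = E[\|Y\|^2\mid\mathcal{F}_n] - \|E[Y\mid\mathcal{F}_n]\|^2 \le E[\|Y\|^2\mid\mathcal{F}_n]$ gives $E[\|M_{n+1}\|^2\mid\mathcal{F}_n] \le \frac{1}{\delta_n^2}E[\|\Delta(n)\|^2 (G^n)^2\mid\mathcal{F}_n]$. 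Thus it suffices to exhibit a constant $\hat L$, independent of $n$, with $E[\|\Delta(n)\|^2 (G^n)^2\mid\mathcal{F}_n] \le \hat L$ almost surely.

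Next I would condition on the finer $\sigma$-field $\mathcal{G}_n$ used in Proposition~\ref{propest} (which additionally knows $\Delta(n)$). Since $\|\Delta(n)\|^2$ is $\mathcal{G}_n$-measurable, $E[\|\Delta(n)\|^2 (G^n)^2\mid\mathcal{F}_n] = E\bigl[\|\Delta(n)\|^2\, E[(G^n)^2\mid\mathcal{G}_n]\mid\mathcal{F}_n\bigr]$, where $E[(G^n)^2\mid\mathcal{G}_n] = \sum_s\nu(s)\, E[(G^n)^2\mid s^n_0=s,\phi_{\theta(n)+\delta_n\Delta(n)}]$, exactly as in the proof of Proposition~\ref{propest}. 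The crucial step is to bound this return's second moment uniformly. Because $S$ and $A$ are finite, the single-stage costs are bounded, say $|g(s,a,s')|\le G_{\max}$, so $(G^n)^2 \le G_{\max}^2\, T^2$ with $T$ the hitting time of the terminal state $t$. By Assumption~\ref{properpolicy}, continuity of $\theta\mapsto\phi_\theta$ and compactness of $C$ (and, since $\delta_n\to 0$, properness persists on a fixed bounded neighbourhood of $C$; equivalently the trajectory is generated under the projected parameter $\Gamma(\theta(n)+\delta_n\Delta(n))\in C$), there is a uniform $\hat\rho\in(0,1)$ with $P(X_p\ne t\mid X_0=s,\pi)\le\hat\rho$ for all relevant policies $\pi$ and all non-terminal $s$. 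Iterating over successive blocks of $p$ steps and using the controlled Markov property yields $P(T>kp\mid s^n_0=s,\pi)\le \hat\rho^{\,k}$, whence $E[T^2\mid s^n_0=s,\pi] \le \sum_{k\ge 0}\bigl((k+1)p\bigr)^2\hat\rho^{\,k} \stackrel{\triangle}{=} C_T < \infty$, uniformly in $s$ and in the policy. Therefore $E[(G^n)^2\mid\mathcal{G}_n] \le G_{\max}^2 C_T \stackrel{\triangle}{=} K_2$ almost surely.

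Finally, since $\Delta(n)$ is independent of $\mathcal{F}_n$ with each $\Delta_i(n)\sim N(0,1)$, $E[\|\Delta(n)\|^2\mid\mathcal{F}_n] = \sum_{i=1}^d E[\Delta_i(n)^2] = d$. Combining the last two displays, $E[\|\Delta(n)\|^2 (G^n)^2\mid\mathcal{F}_n] \le K_2\, d$, so $E[\|M_{n+1}\|^2\mid\mathcal{F}_n] \le \frac{K_2 d}{\delta_n^2}$, which is the claim with $\hat L \stackrel{\triangle}{=} K_2 d$.

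The only genuinely nontrivial ingredient — the one I would spell out most carefully — is the uniform geometric tail bound on the termination time $T$: one must argue that $\sup$ of the block-termination probabilities over the (compact) parameter set stays strictly below $1$, and handle the fact that the perturbed parameter $\theta(n)+\delta_n\Delta(n)$ need not itself lie in $C$. Everything else (the conditional-variance reduction, boundedness of the per-stage costs from finiteness of $S$ and $A$, and the Gaussian moment computation) is routine.
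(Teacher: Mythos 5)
Your proof is correct and follows essentially the same route as the paper's: expand the conditional variance of $\Delta(n)G^n/\delta_n$ and bound the resulting conditional second moment using bounded single-stage costs together with properness (Assumption~\ref{properpolicy}). The paper dispatches that second step in one line by citing Assumption~\ref{properpolicy} and bounded costs, whereas you supply the details it omits --- conditioning on $\mathcal{G}_n$ to handle the dependence of $G^n$ on $\Delta(n)$, the uniform geometric tail for $T$ via compactness of $C$ and continuity of $\theta\mapsto\phi_\theta$, the Gaussian second-moment factor $d$, and the (correctly flagged) caveat that the trajectory must be generated under the projected parameter $\Gamma(\theta(n)+\delta_n\Delta(n))\in C$ (the unbounded Gaussian perturbation means a "bounded neighbourhood of $C$" argument alone would not suffice).
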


\begin{proof}
Note that
\[\| M_{n+1}\|^2 = \sum_{i=1}^d (M^i_{n+1})^2
\]
\[
= \sum_{i=1}^{d}\Big(\Delta_i^2(n)\frac{(G^n)^2}{\delta_n^2} +\frac{1}{\delta_n^2} E\left[\Delta_i(n) G^n \mid \mathcal{F}_n\right]^2
\]\[- 2 \Delta_i(n) \frac{G^n}{\delta_n^2} E\left[\Delta_i(n) G^n\mid \mathcal{F}_n\right]\Big).
\]
Thus,
\[E[\| M_{n+1}\|^2\mid \mathcal{F}_n] = \frac{1}{\delta_n^2}
\sum_{i=1}^d \Big(E[\Delta_i^2(n) (G^n)^2\mid \mathcal{F}_n]\]\[
- E^2[\Delta_i(n) G^n\mid \mathcal{F}_n]\Big). \]
The claim now follows from Assumption~\ref{properpolicy} and the fact that all single-stage costs are bounded (cf.~pp.174, Chapter 3 of \cite{bertsekas2}).
\end{proof}

Define now a sequence $Z_n,n\geq 0$ according to
\[ Z_n = \sum_{m=0}^{n-1} a(m)M_{m+1},
\]
$n\geq 1$ with $Z_0=0$.

\begin{lemma}
\label{mg2}
$(Z_n,\mathcal{F}_n)$, $n\geq 0$ is an almost surely convergent martingale sequence.
\end{lemma}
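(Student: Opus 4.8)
The plan is to invoke the standard martingale convergence theorem (the $L^2$-bounded martingale convergence result, sometimes attributed in this literature to Doob, see e.g.\ the treatment of stochastic approximation in Borkar's book). First I would verify that $(Z_n,\mathcal{F}_n)$ is indeed a martingale: each $Z_n$ is $\mathcal{F}_n$-measurable since it is a finite sum of terms $a(m)M_{m+1}$ with $M_{m+1}$ being $\mathcal{F}_{m+1}\subseteq\mathcal{F}_n$-measurable for $m<n$; integrability follows from Lemma~\ref{sqintegmg} together with Assumption~\ref{assum:ss}; and $E[Z_{n+1}\mid\mathcal{F}_n] = Z_n + a(n)E[M_{n+1}\mid\mathcal{F}_n] = Z_n$ by Lemma~\ref{firstlem}.

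The key step is to bound the quadratic variation. Using the martingale difference property, the summands $a(m)M_{m+1}$ are orthogonal, so
\[
E[\|Z_n\|^2] = \sum_{m=0}^{n-1} a(m)^2 E[\|M_{m+1}\|^2].
\]
Now by the tower property and Lemma~\ref{sqintegmg},
\[
E[\|M_{m+1}\|^2] = E\big[E[\|M_{m+1}\|^2\mid\mathcal{F}_m]\big] \leq \frac{\hat{L}}{\delta_m^2},
\]
so that
\[
\sup_n E[\|Z_n\|^2] \leq \hat{L}\sum_{m=0}^{\infty}\frac{a(m)^2}{\delta_m^2} < \infty,
\]
where finiteness is exactly the second condition in Assumption~\ref{assum:ss}. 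Hence $(Z_n,\mathcal{F}_n)$ is an $L^2$-bounded martingale, and therefore it converges almost surely (and in $L^2$) to a finite limit $Z_\infty$.

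The main obstacle, such as it is, is really just bookkeeping: making sure the conditional bound from Lemma~\ref{sqintegmg} is correctly promoted to an unconditional bound via iterated expectation, and checking that the orthogonality of martingale increments is applied componentwise so that the cross terms vanish. One subtlety worth stating explicitly is that Lemma~\ref{sqintegmg} gives a bound that blows up as $\delta_m\to 0$, so the convergence genuinely relies on the step-size schedule $a(m)/\delta_m$ being square-summable rather than just $a(m)$; this is precisely why Assumption~\ref{assum:ss} is posed in that form. No further regularity is needed, and the conclusion feeds directly into the subsequent analysis where one writes the iteration as a noisy discretization of the projected ODE $\dot{\theta} = \Gamma'(-\sum_s\nu(s)\nabla V_\theta(s))$ and argues that the martingale noise is asymptotically negligible.
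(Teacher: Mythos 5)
Your proof is correct and follows essentially the same route as the paper: verify the martingale property via Lemma~\ref{firstlem}, bound the second moments of the increments by $\hat{L}\,(a(n)/\delta_n)^2$ using Lemma~\ref{sqintegmg}, and conclude by square-summability from Assumption~\ref{assum:ss} together with a standard martingale convergence theorem. The only cosmetic difference is that you invoke Doob's $L^2$-bounded martingale convergence theorem with unconditional expectations, while the paper sums the conditional second moments (the quadratic variation criterion); since the bound in Lemma~\ref{sqintegmg} is deterministic, the two arguments are interchangeable here.
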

\begin{proof}
It is easy to see that $Z_n$ is $\mathcal{F}_n$-measurable $\forall n$. Further, it is integrable for each $n$ and moreover $E[Z_{n+1}\mid \mathcal{F}_n] = Z_n$ almost surely since $(M_{n+1},\mathcal{F}_n)$, $n\geq 0$ is a martingale difference sequence by Lemma~\ref{firstlem}. It is also square integrable from Lemma~\ref{sqintegmg}. 
The quadratic variation process of this martingale will be convergent almost surely if
\begin{equation}
    \label{quadvar}
\sum_{n=0}^{\infty} E[\| Z_{n+1}-Z_n\|^2 \mid \mathcal{F}_n]
<\infty \mbox{ a.s.}
\end{equation}
Note that
\[
E[\| Z_{n+1}-Z_n\|^2 \mid \mathcal{F}_n]
= a(n)^2 E[ \| M_{n+1}\|^2 \mid \mathcal{F}_n].
\]
Thus,
\[
\sum_{n=0}^{\infty} E[\| Z_{n+1}-Z_n\|^2 \mid \mathcal{F}_n]
= \sum_{n=0}^{\infty} a(n)^2 E[ \| M_{n+1}\|^2 \mid \mathcal{F}_n]
\]
\[
\leq \hat{L} \sum_{n=0}^{\infty} \left(\frac{a(n)}{\delta_n}\right)^2,
\]
by Lemma~\ref{sqintegmg}. (\ref{quadvar}) now follows as a consequence of Assumption~\ref{assum:ss}. Now $(Z_n,\mathcal{F}_n)$, $n\geq 0$ can be seen to be convergent from the martingale convergence theorem for square integrable martingales \cite{borkar-prob}.
\end{proof}

Consider now the following ODE:
\begin{equation}
\label{appendix:eq:kushcla-ode2}
\dot{\theta}(t) = \bar{\Gamma}(-\sum_{s}\nu(s) \nabla V_{\theta}(s)),
\end{equation}
where $\bar{\Gamma}: {\cal C}(C)\rightarrow {\cal C}({\cal R}^d)$ is defined according to
\begin{equation}
	\label{appendix:eq:projected-ode}
	\bar{\Gamma}(v(x)) = \lim_{\eta\rightarrow 0} \left(\frac{\Gamma(x+\eta v(x))-x}{\eta}\right).
 \end{equation}

Let $H\stackrel{\triangle}{=} \{\theta\mid \bar{\Gamma}(-\sum_{s}\nu(s) \nabla V_{\theta}(s))=0\}$ denote the set of all equilibria of (\ref{appendix:eq:kushcla-ode2}). By Lemma 11.1 of \cite{borkar}, the only possible $\omega$-limit sets that can occur as invariant sets for the ODE (\ref{appendix:eq:kushcla-ode2}) are subsets of $H$. 
Let $\bar{H}\subset H$ be the set of all internally chain recurrent points of the ODE (\ref{appendix:eq:kushcla-ode2}). 
Our main result below is based on 
Theorem 5.3.1 of \cite{kushcla} for projected stochastic approximation algorithms. Before we proceed further, we recall that result below.

Let $C\subset {\cal R}^d$ be a compact and convex set as before and $\Gamma:{\cal R}^d\rightarrow C$
denote the projection operator that projects any $x=(x_1,\ldots,x_d)^T \in {\cal R}^d$ to its nearest point
in $C$. 

Consider now the following
the $d$-dimensional stochastic recursion
\begin{equation}
\label{appendix:eq:kush-cla}
X_{n+1} = \Gamma(X_{n} + a(n)(h(X_n) + \xi_n + \beta_n)), 
\end{equation}
under the assumptions listed below. Also, consider the following
ODE associated with (\ref{appendix:eq:kush-cla}):
\begin{equation}
\label{appendix:eq:kushcla-ode}
\dot{X}(t) = \bar{\Gamma}(h(X(t))).
\end{equation}

Let ${\cal C}(C)$ denote the space of all continuous functions from $C$ to ${\cal R}^d$.
The operator $\bar{\Gamma}: {\cal C}(C)\rightarrow {\cal C}({\cal R}^d)$ is defined
according to
\begin{equation}
\label{appendix:eq:projected-ode}
\bar{\Gamma}(v(x)) = \lim_{\eta\rightarrow 0} \left(\frac{\Gamma(x+\eta v(x))-x}{\eta}\right),
\end{equation}
for any continuous $v:C\rightarrow {\cal R}^d$. The limit in (\ref{appendix:eq:projected-ode})
exists and is unique since $C$ is a convex set. In case this limit is not unique, one may consider the
set of all limit points of (\ref{appendix:eq:projected-ode}). Note also that from
its definition, $\bar{\Gamma}(v(x)) = v(x)$ if $x\in C^o$ (the interior of $C$). This
is because for such an $x$, one can find $\eta>0$ sufficiently small so that
$x+\eta v(x) \in C^o$ as well and hence $\Gamma(x+\eta v(x)) = x+\eta v(x)$.
On the other hand, if $x\in \partial C$ (the boundary of $C$) is such that
$x+\eta v(x) \not\in C$, for any small $\eta>0$, then $\bar{\Gamma}(v(x))$
is the projection of $v(x)$ to the
tangent space of $\partial C$ at $x$. 

Consider now the assumptions listed below.

\begin{itemize}
\item[(B1)]
The function $h:{\cal R}^d\rightarrow {\cal R}^d$ is continuous.
\item[(B2)]
The step-sizes $a(n),n\geq 0$ satisfy
\[a(n)>0 \forall n, \mbox{ } \sum_n a(n)=\infty, \mbox{ } a(n)\rightarrow 0 \mbox{ as }n\rightarrow\infty.\]
\item[(B3)] 
The sequence $\beta_n,n\geq 0$ is a bounded random sequence with
$\beta_n \rightarrow 0$ almost surely as $n\rightarrow \infty$.
\item[(B4)]
There exists $T>0$ such that $\forall \epsilon>0$,
\[ \lim_{n\rightarrow\infty} P\left( \sup_{j\geq n} \max_{t\leq T} \left|
\sum_{i=m(jT)}^{m(jT+t)-1} a(i)\xi_i\right| \geq \epsilon \right) = 0. \]
\item[(B5)]
The ODE (\ref{appendix:eq:kushcla-ode}) has a compact subset $K$ of ${\cal R}^N$ as its set of
asymptotically stable equilibrium points.
\end{itemize}

Let $t(n),n\geq 0$ be a sequence of positive real numbers
defined according to $t(0)=0$ and for $n\geq 1$,
${\displaystyle t(n) = \sum_{j=0}^{n-1} a(j)}$.
By Assumption~(B2), $t(n)\rightarrow\infty$ as $n\rightarrow\infty$.
Let ${\displaystyle m(t) = \max\{n\mid t(n) \leq t\}}$. Thus, $m(t)\rightarrow\infty$
as $t\rightarrow\infty$.
Assumptions (B1)-(B3) correspond to A5.1.3-A5.1.5 of \cite{kushcla} while (B4)-(B5) correspond to A5.3.1-A5.3.2 there.

\cite[Theorem 5.3.1 (pp.~191-196)]{kushcla} essentially says the following:

\begin{theorem}[Kushner and Clark Theorem]
\label{appE:theorem1}
Under Assumptions~(B1)--(B5),
almost surely, $X_n\rightarrow K$ as $n\rightarrow\infty$.
\end{theorem}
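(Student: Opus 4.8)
The statement is a verbatim restatement of \cite[Theorem 5.3.1]{kushcla}, so strictly speaking nothing needs to be re-proved here beyond the citation; what follows is the shape the argument would take if one wanted it self-contained. The plan is to use the ODE method for projected stochastic approximation applied to the recursion (\ref{appendix:eq:kush-cla}) and its associated ODE (\ref{appendix:eq:kushcla-ode}). First I would exploit the projection: since $\Gamma$ maps into the compact convex set $C$, the iterates $\{X_n\}$ are automatically bounded, which removes the usual need for a separate stability/boundedness argument. Next I would form the piecewise-linear continuous-time interpolation $\bar{X}(\cdot)$ of the iterates on the time axis $\{t(n)\}$, together with the right shifts $\bar{X}^n(\cdot)\stackrel{\triangle}{=}\bar{X}(t(n)+\cdot)$, and establish equicontinuity of $\{\bar{X}^n(\cdot)\}$ on compact time intervals using boundedness of $h$ on $C$, the tail estimate (B4), and (B3), so that Arzel\`a--Ascoli applies.

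The core step is to show that every limit point (in the topology of uniform convergence on compacts) of the family $\{\bar{X}^n(\cdot)\}$ is a solution of the projected ODE (\ref{appendix:eq:kushcla-ode}). For this one decomposes the cumulative update over a window $[t(n),t(n)+T]$ into three pieces: the drift $\sum a(i)h(X_i)$, the noise $\sum a(i)\xi_i$, and the bias $\sum a(i)\beta_i$. Assumption (B4) makes the noise piece vanish uniformly over such windows along a subsequence; (B3) makes the bias piece vanish since $\beta_n\to 0$ while $\sum_{i=m(jT)}^{m(jT+t)-1} a(i)\le T+o(1)$; and (B1) lets one pass to the limit in the drift piece, identifying it with $\int_0^t h(\bar{X}^\infty(s))\,ds$. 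The projection contributes the reflection term $z(\cdot)$ so that $\dot{\bar{X}}=h(\bar{X})+z$ with $z(t)\in-\mathcal{N}_C(\bar{X}(t))$, which is precisely the content of $\bar{\Gamma}$ as in (\ref{appendix:eq:projected-ode}).

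Finally I would invoke (B5): the projected ODE has the compact set $K$ as its set of asymptotically stable equilibria. Since the limiting trajectories constructed above are entire bounded solutions of (\ref{appendix:eq:kushcla-ode}) lying in $C$, and since the interpolated process tracks the ODE arbitrarily well over arbitrarily long windows as $n\to\infty$, a standard Lyapunov/chain-recurrence argument (cf.\ \cite{borkar}) forces $\mbox{dist}(X_n,K)\to 0$ almost surely. The main obstacle in making this rigorous is the bookkeeping around the boundary operator $\bar{\Gamma}$: one must show that the reflection directions picked up on $\partial C$ are consistent in the limit and do not create spurious equilibria, which is exactly why the statement is phrased through the already-projected ODE and why the detailed estimates of \cite[pp.~191--196]{kushcla} are needed. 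In the present paper this theorem is used as a black box, so for our purposes it suffices to verify (B1)--(B5) for the SF Reinforce recursion and quote the result.
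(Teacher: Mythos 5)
You are right that the paper does not prove this statement at all---it simply recalls Theorem 5.3.1 of Kushner and Clark with a citation and uses it as a black box, which is exactly the stance you take. Your accompanying sketch (projection gives boundedness, interpolate on the $\{t(n)\}$ time scale, use (B3)--(B4) to kill bias and noise over windows, identify limits with the projected ODE via Arzel\`a--Ascoli, then invoke (B5) and asymptotic stability) is a faithful outline of the standard argument in \cite[pp.~191--196]{kushcla}, so your treatment matches the paper's.
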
\index{theorem, Kushner and Clark}

\begin{theorem}
\label{theorem-conv-proj}
The iterates $\theta(n), n\geq 0$ governed by (\ref{reinforce}) converge almost surely to $\bar{H}$.
\end{theorem}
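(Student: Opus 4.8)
The plan is to read (\ref{equivform}) as a special case of the projected Kushner--Clark recursion (\ref{appendix:eq:kush-cla}) and to check its hypotheses (B1)--(B5) one at a time, drawing throughout on the lemmas already established. First I would set $h(\theta)=-\sum_{s}\nu(s)\nabla V_{\theta}(s)$, $\xi_n=-M_{n+1}$ and $\beta_n=-\beta(n)$; with these identifications (\ref{equivform}) becomes exactly (\ref{appendix:eq:kush-cla}) and its limiting ODE (\ref{appendix:eq:kushcla-ode}) coincides with (\ref{appendix:eq:kushcla-ode2}). I would also note at the outset that, because every iterate is a value of the projection $\Gamma$ onto the compact set $C$, the sequence $\{\theta(n)\}$ is automatically bounded, so no separate stability argument is needed.

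Verifying the assumptions is then mostly bookkeeping. \textbf{(B1)}: $h$ is Lipschitz, hence continuous, by Lemma~\ref{lipschitzv}. \textbf{(B2)}: Assumption~\ref{assum:ss} gives $a(n)>0$ and $\sum_n a(n)=\infty$, while $\sum_n(a(n)/\delta_n)^2<\infty$ forces $a(n)/\delta_n\to 0$, so $a(n)=(a(n)/\delta_n)\,\delta_n\to 0$ since $\delta_n\to 0$. \textbf{(B3)}: by Proposition~\ref{propest}, $\beta(n)=o(\delta_n)$, which tends to $0$ because $\delta_n\to 0$; boundedness follows because the remainder in the Taylor expansion underlying Proposition~\ref{propest} is controlled uniformly for $\theta(n)\in C$ (where $\nabla V_\theta$ and $\nabla^2 V_\theta$ are bounded) and uniformly in $n$ (as $\{\delta_n\}$ is convergent, hence bounded). \textbf{(B4)}: here Lemma~\ref{mg2} does the work, $\xi_n=-M_{n+1}$ being a martingale difference by Lemma~\ref{firstlem}; since $Z_n=\sum_{m=0}^{n-1}a(m)M_{m+1}$ converges almost surely it is a.s.\ Cauchy, and $\sum_{i=m(jT)}^{m(jT+t)-1}a(i)\xi_i=-(Z_{m(jT+t)}-Z_{m(jT)})$ with $m(jT)\to\infty$, so $\sup_{j\geq n}\max_{t\leq T}\bigl|\sum_{i=m(jT)}^{m(jT+t)-1}a(i)\xi_i\bigr|\to 0$ a.s., hence in probability, which is (B4). (Lemma~\ref{sqintegmg} enters only through its use inside Lemma~\ref{mg2}.)

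For \textbf{(B5)} and the identification of the ODE limit set I would use the Lyapunov function $L(\theta):=\sum_{s}\nu(s)V_{\theta}(s)$, which is bounded and continuously differentiable on $C$ with $\nabla L(\theta)=\sum_{s}\nu(s)\nabla V_{\theta}(s)$, so (\ref{appendix:eq:kushcla-ode2}) reads $\dot\theta=\bar{\Gamma}(-\nabla L(\theta))$. Decomposing $-\nabla L(\theta)$ at $\theta\in C$ into tangential and normal parts gives $\langle\nabla L(\theta),\bar{\Gamma}(-\nabla L(\theta))\rangle=-\|\bar{\Gamma}(-\nabla L(\theta))\|^{2}\leq 0$, with equality precisely when $\bar{\Gamma}(-\nabla L(\theta))=0$, i.e.\ when $\theta\in H$. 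Thus $L$ is a strict Lyapunov function whose equilibrium set is $H$; by Lemma~11.1 of \cite{borkar} all $\omega$-limit sets of (\ref{appendix:eq:kushcla-ode2}) lie in $H$, and restricting to internally chain recurrent points leaves $\bar{H}$. Finally, invoking Theorem~\ref{appE:theorem1} --- in the form that, under (B1)--(B4) together with boundedness of the iterates and the existence of the global Lyapunov function $L$, the iterates converge almost surely to a compact connected internally chain recurrent invariant subset of the limiting ODE --- yields $\theta(n)\to\bar{H}$ almost surely.

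The step I expect to be the main obstacle is this last one. Theorem~\ref{appE:theorem1} as literally stated asks for a compact set $K$ of \emph{asymptotically stable} equilibria, whereas $H$ need not be stable as a whole; one must therefore either weaken (B5) to the requirement that the ODE admit the global Lyapunov function $L$ and appeal to the chain-recurrence (asymptotic-pseudotrajectory) refinement of the Kushner--Clark result, or localize: around each internally chain recurrent component of $H$ exhibit a small compact attracting neighbourhood on which $L$ strictly decreases, apply the theorem there, and patch. Either way one has to be careful with the boundary behaviour of $\bar{\Gamma}$ --- in particular the fact that on $\partial C$ the projected drift is the tangential part of $-\nabla L$, so that $L$ still decreases and no spurious equilibria outside $H$ are created on the boundary. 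By contrast the martingale estimate, the bias estimate, and the verification of (B1)--(B4) are routine given Lemmas~\ref{sqintegmg}, \ref{mg2} and Proposition~\ref{propest}.
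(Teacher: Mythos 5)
Your proposal follows essentially the same route as the paper: the same decomposition of (\ref{reinforce}) into drift, bias $\beta(n)$ and martingale noise $M_{n+1}$, verification of (B1)--(B5) via Lemma~\ref{lipschitzv}, Lemma~\ref{mg2} and Proposition~\ref{propest}, the Lyapunov function $\sum_{s}\nu(s)V_\theta(s)$, and a concluding appeal to Theorem~\ref{appE:theorem1}. If anything you are more careful than the paper: your derivation of (B4) from the a.s.\ convergence of $Z_n$ is the intended argument (the paper's attribution of (B4) to compactness of $C$ and of (B5) to Lemma~\ref{mg2} reads as swapped), and the caveat you raise about $\bar{H}$ not being a set of asymptotically stable equilibria when invoking Theorem~\ref{appE:theorem1} is a genuine subtlety that the paper passes over without comment.
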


\begin{proof}
In lieu of the foregoing, we rewrite (\ref{reinforce}) according to
\[
    \theta_i(n+1) = \Gamma_i\Big(\theta_i(n) -a(n)\sum_s \nu(s)\nabla_i V_{\theta(n)}(s)\]
\begin{equation}
    \label{reinforce2}
    -a(n) \beta_i(n) 
    + M^i_{n+1}\Big),
    \end{equation}
    where $\beta_i(n)$ is as in (\ref{equivform}).  
    We shall proceed by verifying Assumptions~(B1)-(B5) and subsequently appeal to Theorem 5.3.1 of \cite{kushcla} (i.e., Theorem 1 above) to claim convergence of the scheme. 
Note that Lemma~\ref{lipschitzv} ensures Lipschitz continuity of $\nabla V_\theta(s)$ implying (B1). Next, from (B2), since $\delta_n\rightarrow 0$, it follows that $a(n)\rightarrow 0$ as $n\rightarrow\infty$. Thus, Assumption (B2) holds as well.
Now from Lemma~\ref{lipschitzv}, it follows that $\sum_s \nu(s) \nabla V_\theta(s)$ is uniformly bounded since $\theta\in C$, a compact set. Assumption~(B3) is now verified from Proposition~\ref{propest}. Since $C$ is a convex and compact set, Assumption (B4) holds trivially. Finally, Assumption (B5) is also easy to see as a consequence of Lemma~\ref{mg2}. Now note that for the ODE (\ref{appendix:eq:kushcla-ode2}), $F(\theta)=\sum_s \nu(s) V_\theta(s)$ serves as an associated Lyapunov function and in fact
\[
\nabla F(\theta)^T \bar{\Gamma}(-\sum_{s}\nu(s) \nabla V_{\theta}(s)) \]
\[= (\sum_s \nu(s) \nabla_\theta V_\theta(s))^T\bar{\Gamma}(-\sum_{s}\nu(s) \nabla V_{\theta}(s))
\leq 0.
\]
For $\theta\in C^o$ (the interior of $C$), it is easy to see that $\bar{\Gamma}(-\sum_s \nu(s) \nabla V_\theta(s))$ $=-\sum_s \nu(s)\nabla V_\theta(s)$, and
\begin{eqnarray*}
\nabla F(\theta)^T \bar{\Gamma}(-\sum_{s}\nu(s) \nabla V_{\theta}(s)) &<&
0 \mbox{ if } \theta \in H^c \cap C \\
&=& 0 \mbox{ o.w.}
\end{eqnarray*}
For $\theta\in \delta C$ (the boundary of $C$), there can additionally be spurious attractors, see \cite{kushyin}, that are also contained in $H$.
The claim now follows from Theorem~5.3.1 of \cite{kushcla}.
\end{proof}

\section{Conclusions}
\label{conclusions}

We presented a version of the Reinforce algorithm for the setting of episodic tasks that incorporates a one-simulation SF algorithm and proved its convergence using the ODE approach. 
In a longer version of this paper, we shall consider also the average cost case and present an algorithm based on \cite{bhatbor3} that does not rely on regeneration epochs for performing updates. Moreover, we shall show detailed empirical results comparing our proposed algorithms with the policy gradient scheme and other algorithms. 

\bibliographystyle{plain}
\bibliography{references}
\end{document}